\pgfplotsset{compat=1.10}
\definecolor{Gray}{gray}{0.90}
\newtheorem{theorem}{\bf{Theorem}}[section]
\newtheorem{prop}[theorem]{\bf{Proposition}}
\newtheorem{lemma}{Lemma}
\newenvironment{definition}[1][Definition]{\begin{trivlist}
\item[\hskip \labelsep {\bfseries #1}]}{\end{trivlist}}
\newcommand{\calT}[0]{\mathcal{T}}
\newcommand{\calH}[0]{\mathcal{H}}
\newcommand{\calD}[0]{\mathcal{D}}
\newcommand{\calM}[0]{\mathcal{M}}
\newcommand\fat[1]{\ThisStyle{\ooalign{%
  \kern.46pt$\SavedStyle#1$\cr\kern.33pt$\SavedStyle#1$\cr%
  \kern.2pt$\SavedStyle#1$\cr$\SavedStyle#1$}}}
\definecolor{amber}{rgb}{1.0, 0.75, 0.0}
\newtheorem{prob}{\textbf{Problem}}
\newtheorem{probs}{\textbf{Problem}}
\numberwithin{probs}{section} % important bit
\begin{document}

\title{Control-based Graph Embeddings with Data Augmentation\\ for Contrastive Learning}
%Estimating Strong Structural Controllability using Graph Neural Networks}
\author{
Obaid Ullah Ahmad, Anwar Said, Mudassir Shabbir, Waseem Abbas, Xenofon Koutsoukos
\thanks{Obaid Ullah Ahmad is with the Electrical Engineering Department at the University of Texas at Dallas, Richardson, TX. Email: Obaidullah.Ahmad@utdallas.edu.}
\thanks{Anwar Said, Mudassir~Shabbir and Xenofon Koutsoukos are with the Computer Science Department at the Vanderbilt University, Nashville, TN. Emails: anwar.said,mudassir.shabbir, xenofon.koutsoukos@vanderbilt.edu.} 
\thanks{Waseem~Abbas is with the Systems Engineering Department at the University of Texas at Dallas, Richardson, TX. Email: waseem.abbas@utdallas.edu}
}

\maketitle
%\IEEEtitleabstractindextext{%
%==================== Abstract =====================
\begin{abstract}
In this paper, we study the problem of unsupervised graph representation learning by harnessing the control properties of dynamical networks defined on graphs. Our approach introduces a novel framework for contrastive learning, a widely prevalent technique for unsupervised representation learning. A crucial step in contrastive learning is the creation of `augmented' graphs from the input graphs. Though different from the original graphs, these augmented graphs retain the original graph's structural characteristics. Here, we propose a unique method for generating these augmented graphs by leveraging the control properties of networks. The core concept revolves around perturbing the original graph to create a new one while preserving the controllability properties specific to networks and graphs. Compared to the existing methods, we demonstrate that this innovative approach enhances the effectiveness of contrastive learning frameworks, leading to superior results regarding the accuracy of the classification tasks. The key innovation lies in our ability to decode the network structure using these control properties, opening new avenues for unsupervised graph representation learning.
%Graph representation learning is a fundamental task with applications spanning various domains, from biology to social and financial networks. In this paper, we introduce Control-based Graph Contrastive Learning (CGCL), a novel framework that leverages the controllability properties of graphs for unsupervised graph representation learning. We propose sophisticated edge augmentation methods that preserve the controllability rank of graphs while creating augmented data for contrastive learning. CGCL is evaluated on standard graph classification benchmarks and compared against state-of-the-art unsupervised and self-supervised methods. The experimental results demonstrate that CGCL consistently outperforms existing techniques on several datasets, showcasing its effectiveness in learning informative graph embeddings.
\end{abstract}

% \begin{keywords}

% \end{keywords}
%========== Section: INTRODUCTION ===================
\section{Introduction}

Networks serve as fundamental data structures for representing relationships, connectivity, and interactions across various domains, such as social networks, biology, transportation, brain connectivity, and recommendation systems~\cite{newman2018networks
, said2023neurograph}. Network representation learning plays a pivotal role in acquiring meaningful network representations, which find applications in tasks like node classification, link prediction, and community detection~\cite{said2021dgsd}. Traditional network representation learning heavily relies on supervised learning, necessitating substantial labeled data for effective training~\cite{hamilton2017inductive}. However, obtaining labeled network data is often challenging, expensive, and limited in availability.

Contrarily, contrastive learning (CL) has emerged as a prominent self-supervised learning (SSL) technique in unsupervised network representation learning~\cite{oord2018representation}. CL methods operate by comparing augmented positive and negative samples with the original graph. The positive samples exhibit similarity, while the negative samples manifest dissimilarity. This framework empowers CL methods to acquire representations that capture the inherent network structure, even when labeled data is absent~\cite{chen2020simple}. Graph Contrastive Representation Learning has recently gained attention in the context of graph representation learning, aiming to maximize agreement between similar subgraphs and produce informative embeddings that capture the graph structure~\cite{you2020graph}. While existing GCRL approaches primarily focus on node-level embeddings~\cite{xia2022hypergraph}, our proposed architecture has the potential to generate graph-level embeddings suitable for SSL.

In this work, we introduce a novel approach that leverages control properties to design graph-level embeddings for self-supervised learning. Recent research has uncovered deep connections between network controllability and various graph-theoretic constructs, including matching, graph distances, and zero forcing sets~\cite{yaziciouglu2016graph, monshizadeh2014zero,mesbahi2010graph}. Additionally, significant progress has been made in characterizing the controllability of different families of network graphs, including paths, cycles, random graphs, circulant graphs, and product graphs~\cite{mesbahi2010graph}. These investigations shed light on the interplay between network structures and their controllability properties, enhancing our understanding of network dynamics. Our aim is to explore and harness the interconnections between network structures and their controllability properties to form a foundation for comprehensive graph representations.

Furthermore, we introduce systematic graph augmentation for creating positive and negative pairs in CL, in contrast to previous random edge perturbation methods~\cite{you2020graph}. Our systematic approach focuses on preserving the graph's control properties, leading to improved performance in downstream machine-learning tasks.

Our main contributions can be summarized as follows:

\begin{itemize}
    \item  We introduce a novel graph embedding---representing graphs as vectors--- called CTRL, which is based on the control properties of networks defined on graphs, including meaningful metrics of controllability such as the spectrum of the Gramian matrix.
    \item We present the Control-based Graph Contrastive Learning architecture for unsupervised representation learning of networks, applicable to various downstream graph-level tasks.
    \item We devise innovative augmentation techniques that mainly preserve the controllability of the network.
    \item We conduct extensive numerical evaluations on real-world graph datasets, showcasing the effectiveness of our method in graph classification compared to several state-of-the-art (SOTA) benchmark methods.
\end{itemize}

\section{Preliminaries and Problem Statement}

%-------------------------------
\subsection{Notations}
A network composed of interconnected entities is represented as a graph, denoted as $G=(V,E)$, where the set of vertices $V=V(G)={v_1,v_2,\ldots,v_N}$ represents the entities, and the collection of edges $E=E(G)\subseteq V\times V$ represents pairs of entities with established relationships. In this context, the terms 'vertex,' 'node,' and 'agent' are used interchangeably. The neighborhood of a vertex $v_i$ is defined as $\mathcal{N}_i = {v_j\in V: (v_j,v_i)\in E}$. 
The \emph{distance} between vertices $v_i$ and $v_j$, denoted as $d(v_i,v_j)$, represents the shortest path length between them.
% For a graph with $N$ nodes, we represent it using the adjacency matrix $J\in{0,1}^{N\times N}$, where $J_{i,j} = 1$ if and only if there exists an edge between $(v_j,v_i)$, and $J_{i,j} = 0$ otherwise. The degree matrix of graph $G$, denoted as $D$, is a diagonal matrix with $D_{i,i} = \delta_i$. The Laplacian matrix of a graph is defined as $L= D-J$. 
The transpose of a matrix $X$ is denoted as $X^T$. An $N$-dimensional vector with all its entries set to zero is represented as $\boldsymbol{0}_N$, and a vector with all its entries set to 1 is denoted as $\boldsymbol{1}_N$.
Although our primary focus is on undirected graphs for simplicity in explanation, it is important to note that all methods and findings are equally applicable to directed graphs.

%===================== SECTION: Problem Description===========================
\subsection{Problem Description}
\label{sec:problem_desc}

In this subsection, we introduce the problem of generating graph-level representations in an unsupervised manner. We begin by providing a formal definition of the problem and then present a contrastive learning-based solution. For graph-level embeddings, we develop a control-based graph embedding method in Section \ref{sec:control-section}

% %-------------------------------
% \subsection{Graph Representation Learning}
% %-------------------------------
A graph embedding, denoted as $\phi(G):\mathcal{G}\rightarrow \mathbb{R}^d$, maps graphs from the family $\mathcal{G}$ to a Euclidean space of dimension $d$. The main goal of graph embedding is to meet specific design criteria. 
Due to the limited availability of labeled data for large real-world benchmark datasets, we need a mechanism to learn graph representations without heavy reliance on labels.
One crucial objective is to ensure that $\phi$ retains information about structural similarities between pairs of graphs, both at local and global scales. This means that if two graphs share structural similarities, their embeddings should yield vectors that are close to the target vector space, as measured by Euclidean distance. It's important to note that the notion of similarity between graphs depends on the specific application and the type of graphs being considered, such as chemical compounds or social networks.

Another crucial design objective for graph embeddings is scalability. An optimal graph embedding should not only map graphs of different sizes to a fixed-dimensional space but should also transcend graph size to capture underlying structural characteristics. For example, an effective graph embedding would position the mapping of a ten-node circulant graph closer to that of a twenty-node circulant graph compared to the mapping of a fifteen-node wheel graph. In this paper, we tackle the challenge of generating graph embeddings while adhering to these design objectives.
\vspace{1 mm}
\noindent\fbox{\begin{varwidth}{\dimexpr\linewidth-1\fboxsep-2\fboxrule\relax}
    \begin{prob}
    \label{prob:CL}
        % Given a graph $G$, produce unsupervised high-quality graph representations $\phi(G)$ that capture important and distinctive structural information and relationships between nodes in the graph. 
        Given a graph $G$, generate unsupervised graph representations $\phi(G)$ that capture essential structural information and node relationships for subsequent machine learning tasks.
    \end{prob}
\end{varwidth}}
% \vspace{1 mm}

These learned representations $\phi(G)$ are intended to be semantically meaningful and effective for various downstream tasks, including node classification, link prediction, graph classification, and community detection.

%-------------------------------
\subsection{Proposed Approach}
%-------------------------------
A typical approach to learning unsupervised representation of raw data is called self-supervised learning (SSL). One of the powerful techniques of SSL is contrastive learning which has achieved remarkable success across various domains, including computer vision\cite{chen2020simple}. In the realm of graph representation learning, researchers have introduced Contrastive Graph Representation Learning (CGRL) \cite{duan2023self}. This approach operates on the principle of generating diverse augmented perspectives of the same data samples through pretext tasks. We propose to introduce a dynamical system on graphs, examining the control characteristics of this system, and subsequently crafting an embedding that utilizes these control attributes, as elaborated in Section \ref{sec:control-section}.

% We propose to address Problem \ref{prob:CL} by employing contrastive learning, a powerful self-supervised learning technique that has achieved remarkable success across various domains, including computer vision and natural language processing \cite{chen2020simple, he2020momentum, tian2020contrastive}. In the realm of graph representation learning, researchers have introduced Contrastive Graph Representation Learning (CGRL) \cite{duan2023self}. This approach operates on the principle of generating diverse augmented perspectives of the same data samples through pretext tasks.

% We propose to address Problem \ref{prob:CL} by employing contrastive learning, a powerful self-supervised learning technique that has achieved remarkable success across various domains, including computer vision and natural language processing \cite{chen2020simple, he2020momentum, tian2020contrastive}. In the realm of graph representation learning, researchers have introduced Contrastive Graph Representation Learning (CGRL) \cite{duan2023self}. This approach operates on the principle of generating diverse augmented perspectives of the same data samples through pretext tasks.

\paragraph{Graph Contrastive Representation Learning} Graph Contrastive Representation Learning (GCRL) offers distinct advantages over traditional unsupervised graph representation methods. GCRL encourages the model to bring similar nodes or subgraphs closer together in the embedding space while pushing dissimilar nodes or subgraphs farther apart \cite{you2020graph}, enhancing performance in various downstream tasks \cite{sun2019infograph}. It excels in data efficiency, leveraging limited labeled data efficiently with unlabeled data sources \cite{you2020graph}. GCRL is scalable, handling large-scale graph datasets effectively \cite{you2020graph, hassani2020contrastive}. It facilitates easy transfer of learned representations to diverse downstream tasks, including node classification, link prediction, and graph classification \cite{you2020graph}. Lastly, GCRL often produces interpretable embeddings, aiding in the understanding and analysis of the learned representations \cite{sun2019infograph}.

\paragraph{Control-based Graph Contrastive Learning (CGCL)} 
We introduce Control-based Graph Contrastive Learning (CGCL), which computes control-based graph-level features denoted as $CTRL(.)$. These features are utilized to bring an augmented version $G^\prime$ of a graph $G$ closer together in a latent space $z(.)$ using the normalized temperature-scaled cross-entropy loss (NT-Xent) \cite{oord2018representation}. Additionally, we propose various augmentation techniques designed to preserve the $CTRL$ properties of the graph to a certain extent. This is illustrated in Figure \ref{fig:block_dia}.

For a given graph $G$, we apply a control-based augmentation $\calT(G)$ to obtain $G^\prime$, creating a positive pair. We then compute control-based features $CTRL(.)$ for both $G$ and $G^\prime$ and pass them through a learnable encoder $f(.)$ to transform them into a new latent space. The goal is to optimize the similarity of each positive pair in this latent space. This concept is illustrated in Figure \ref{fig:block_dia}, where the embeddings $z_G$ and $z_{G^\prime}$ are represented as yellow cuboids.

\begin{figure*}[t]
    \centering
    \includegraphics[scale = 0.4]{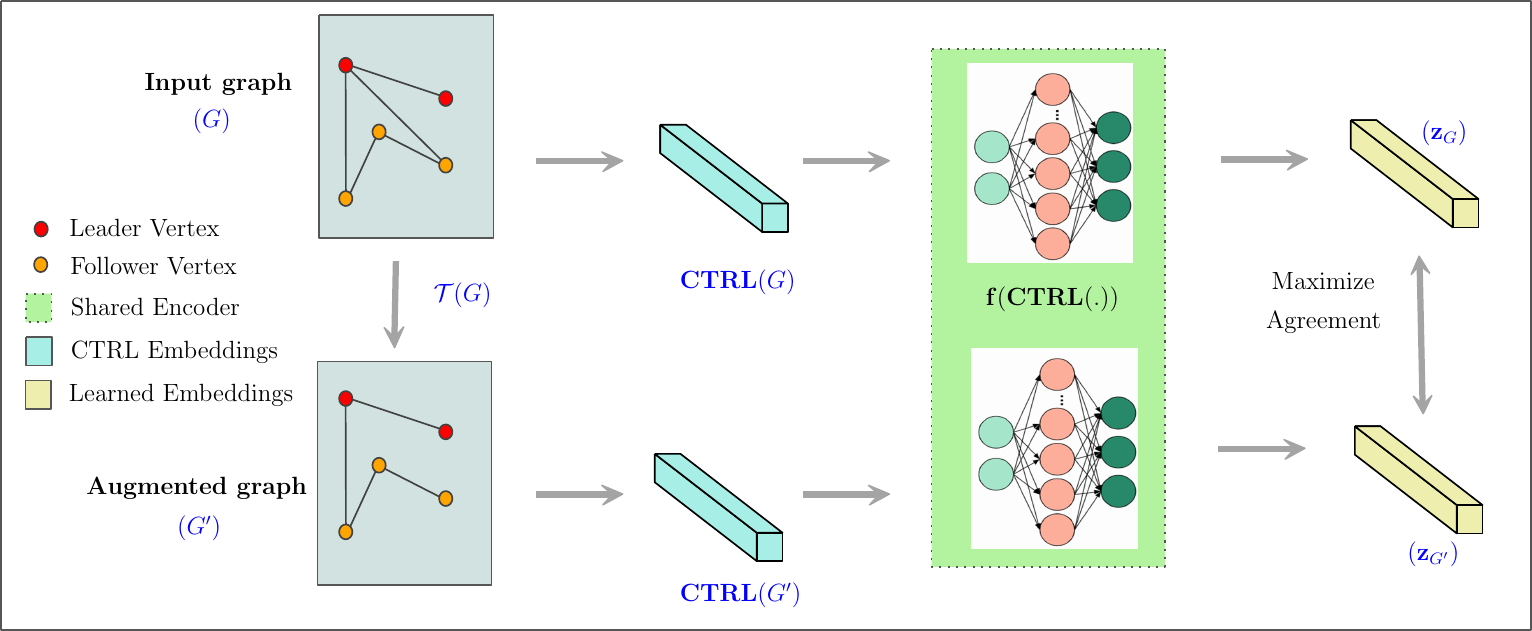}
    \caption{Block diagram of the proposed CGCL approach}
    \label{fig:block_dia}
\end{figure*}

This optimization is achieved using the NT-Xent loss proposed by Oord et al. \cite{oord2018representation}. The loss function encourages the similarity between the embeddings of the original graph and its transformed counterpart (positive pair) while minimizing the similarity with the transformed embeddings $z(.)$ of other graphs in the dataset (negative pairs). This self-supervised learning approach enables the model to capture meaningful representations effectively. The NT-Xent loss is employed as:
\begin{equation}    
\mathcal{L} =  \mathbb{E}\left[-log\frac{exp(sim(z_{G}, z_{G^\prime})/\tau)}{ \frac{1}{|\mathcal{G}|}\sum_{g \in \mathcal{G}, g \neq G} exp(sim(z_{G}, z_{g^\prime})/\tau)}\right],
\end{equation}
where $sim(z_{G}, z_{G^\prime})$ represents the cosine similarity between the embeddings of the graph $G$ and its augmentation $G^\prime$, $\mathcal{G}$ is the set containing all the graphs in the dataset, $g^\prime$ is the augmented version of graph $g$, and $\mathbb{E}$ is the expectation. The temperature hyperparameter $\tau$ is used to control the sharpness of the distribution.

In summary, CGCL leverages contrastive learning principles to generate expressive graph representations by considering control-based graph-level features and optimizing the similarity of positive pairs, demonstrating its potential for self-supervised graph representation learning.

\section{Network Controllability and Graph Embeddings}
\label{sec:control-section}
In this section, we introduce a novel graph embedding approach rooted in the control properties of networks. We begin by presenting a network as a controllable dynamic system. We then provide a formal definition of network controllability and subsequently explore several metrics employed to quantify it. These metrics serve as the foundation for constructing control-based graph embeddings, denoted as $CTRL(G)$, for a given graph $G$~\cite{said2023network}.

%-------------------------------
\subsection{Networks as Dynamical Systems}
%-------------------------------

In the context of network dynamics, each agent, denoted as \(v_i\), represents a dynamic unit with a \emph{state} \(x_i(t) \in \mathbb{R}\) at time \(t\). These agents share their states with neighboring agents in \(\mathcal{N}_i\) and update their states based on specific dynamics, like consensus dynamics. The collective system state at time \(t\) is represented as the vector \(x(t) = [x_1(t) \; x_2(t) \; \ldots \; x_N(t)]^T\).

To control the states of this dynamical system, we introduce external control signals applied to a subset of agents known as \emph{leaders}. These leader agents have states that can be directly manipulated, expressed as \(\dot{x}_l = u_l(t)\), where \(u_l(t)\) represents the input signal. Conversely, non-leader agents, referred to as \emph{followers}, update their states by aggregating information from their local neighborhoods. The dynamics of the followers, denoted as \(\dot{x}_f(t)\) in this leader-follower system, can be expressed as:
\[
\dot{x}_f(t) = -\calM(G) x_f(t) + \calH(G) u(t),
\]
where \(\calM(G)\) represents system matrices related to the followers' subgraph, and \(\calH(G)\) denotes the topological interactions between leader and follower agents. We use the Laplacian matrix for a matrix representation of a network.

For a network represented as a graph \(G = (V, E)\), we partition the node set \(V\) into two groups: followers (\(V_f\)) and leaders (\(V_\ell\)), where \(|V_f| = N_f\) and \(|V_\ell| = N_\ell\). We establish an ordered arrangement for the nodes, with \(V_f\) consisting of nodes \(v_1, v_2, \ldots, v_{N_f}\), and \(V_\ell\) containing nodes \(v_{N_f + 1}, \ldots, v_{N}\). The subgraph composed of follower nodes is referred to as the \emph{follower graph} (\(G_f\)), represented mathematically using the Laplacian matrix \(L\), which is partitioned as:
\begin{equation}
\label{eq:Laplacian_Partition}
    L =\left[
\begin{array}{c|c}
A & B \\
\hline
B^T & C
\end{array}
\right],
\end{equation}
where \(A\in \mathbb{R}^{N_f\times N_f}\), \(B\in\mathbb{R}^{N_f\times N_\ell}\), and \(C\in\mathbb{R}^{N_\ell \times N_\ell}\).

In this configuration, we introduce an external input signal \(u_l\) applied to leader agent \(v_l \in V_\ell\). The follower nodes update their states according to the dynamics given by:
\begin{equation}
    \label{eq:follower_dynamics}
    \dot{\boldsymbol{x}}_f(t) = -A\boldsymbol{x_f}(t) - B u(t), 
\end{equation}
where \(\boldsymbol{x_f}(t)\in\mathbb{R}^{N_f}\) represents the state vector of follower nodes at time \(t\), and \(u(t) = [u_{N_f+1}(t) \; \cdots \; u_{N}(t)]^T \in \mathbb{R}^{N_\ell}\) is the control signal at time \(t\). The matrices \(-A\) and \(-B\) in these dynamics are derived from the network structure and leader agent selection.

From a control perspective, we are interested in assessing the feasibility of steering the system described by these dynamics from an initial state to a final state within a finite time interval \(t_1\). If control is achievable, we aim to quantify the control energy \(\mathcal{E}(u)\) required, as defined below:
\begin{equation}
\label{eq:control_energy}
\mathcal{E}(u) = \int_{\tau = 0}^{t_1} \|u(\tau)\|^2 d\tau.
\end{equation}
We also investigate the dimension of the subspace containing reachable states and the impact of changing leader agents. These inquiries provide insights into the underlying graph structure and guide the derivation of network controllability metrics for graph embeddings.

\subsection{Network Controllability Metrics}
\label{control-matrics}
The process of controlling a network entails the responsibility of directing it from an initial state to a desired final state by applying control inputs to specific leader nodes within the network.
% ---------------COMMENT------------------
\begin{comment}    
In the context of the dynamics expressed in equation \eqref{eq:follower_dynamics}, if $\boldsymbol{x_f}(t_i)$ represents the initial state at time $t_i$, the state at a subsequent time $t_s$ can be described by the following equation:
\begin{equation}
\boldsymbol{x_f}(t_s) = e^{-A(t_s-t_i)} \boldsymbol{x_f}(t_i) \; + \; \int_{t_i}^{t_s} e^{-A(t_s - \tau)}(-B) u(\tau) d\tau.
\end{equation}
\end{comment}
A state $\boldsymbol{x_f^\ast}\in\mathbb{R}^{N_f}$ is considered \emph{reachable} when there exists an input that can propel the network from the origin $\boldsymbol{0}_{N_f}$ to the state $\boldsymbol{x_f^\ast}$ within a finite timeframe. The set comprising all such reachable states defines what we refer to as the \emph{controllable subspace}. Importantly, in continuous linear time-invariant systems, such as the one described by equation \eqref{eq:follower_dynamics}, if a state $\boldsymbol{x_f}^\ast$ is reachable from the origin, it is also reachable from any arbitrary initial state within any given duration of time.

The \emph{dimension} of this controllable subspace $\gamma(G, V_\ell)$, is a pivotal concept in control theory. It can be determined by examining the \emph{rank} of the \emph{Controllability matrix}: $\mathcal{C} = \left[ \begin{array}{llllll}
-B & (-A)(-B) & \cdots & (-A)^{N_f-1}(-B)
\end{array}\right]$. The rank of this matrix hinges on the properties of matrices $A$ and $B$, which, in turn, are contingent on the network's structure and the selection of leader nodes. 
%A network is classified as \emph{completely controllable} exclusively when the rank of $\mathcal{C}$ equals $N_f$.

The \emph{Controllability Gramian} serves as a significant mathematical construct that offers vital insights into the control characteristics of a network \cite{pasqualetti2014controllability,summers2015submodularity,wu2018benchmarking}. Utilizing the Controllability Gramian, we can quantitatively assess the ease of transitioning from one state to another, taking into account the necessary control energy as defined in equation \eqref{eq:control_energy}.

For the system delineated in equation \eqref{eq:follower_dynamics}, the \emph{infinite horizon controllability Gramian} is defined as follows:
\begin{equation}
\label{eq:Gramian}
\mathcal{W} = \int_{0}^{\infty}e^{-A\tau}(-B)(-B)^Te^{-A^T\tau}d\tau \; \in \; \mathbb{R}^{N_f \times N_f}.
\end{equation}

If the system is stable, signifying that all eigenvalues of $-A$ have negative real parts, $\mathcal{W}$ asymptotically converges and can be computed through the Lyapunov equation: \begin{equation}
\label{eq:Lyapunov}
(-A)\mathcal{W} + \mathcal{W}(-A)^T + (-B)(-B)^T  = 0,
\end{equation}

For a solution to exist for \eqref{eq:Lyapunov}, it is necessary for $-A$ to be a stable matrix. This condition holds for connected graphs.

\begin{lemma}
If we partition the Laplacian matrix $L$ of an undirected connected graph as shown in \eqref{eq:Laplacian_Partition}, the matrix $A$ is positive definite \cite{mesbahi2010graph}. \end{lemma}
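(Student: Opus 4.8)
The plan is to show that $A$, the leading principal submatrix of the graph Laplacian $L$ corresponding to the follower nodes, is positive definite whenever $G$ is connected and $V_\ell$ is nonempty. First I would recall that for an undirected graph the Laplacian $L = D - \mathrm{Adj}$ is symmetric and positive \emph{semi}definite, with $x^T L x = \sum_{(v_i,v_j)\in E}(x_i - x_j)^2 \ge 0$, and that for a connected graph the only vectors in its kernel are the constant vectors $c\,\boldsymbol{1}_N$. Since $A$ is a symmetric principal submatrix of $L$, it is automatically positive semidefinite; the whole task is to rule out a nontrivial kernel.

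The key step is a Rayleigh-quotient / interlacing-style argument. Suppose $y \in \mathbb{R}^{N_f}$ satisfies $y^T A y = 0$. Form the padded vector $\tilde{y} = [\,y^T \; \boldsymbol{0}_{N_\ell}^T\,]^T \in \mathbb{R}^N$, so that, using the block partition in \eqref{eq:Laplacian_Partition}, $\tilde{y}^T L \tilde{y} = y^T A y = 0$. By positive semidefiniteness of $L$ this forces $\tilde{y}$ to lie in $\ker L$, hence $\tilde{y} = c\,\boldsymbol{1}_N$ for some scalar $c$ since $G$ is connected. But the last $N_\ell \ge 1$ entries of $\tilde{y}$ are zero by construction, which forces $c = 0$, and therefore $y = \boldsymbol{0}_{N_f}$. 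Thus $y^T A y = 0$ implies $y = \boldsymbol 0$, and combined with positive semidefiniteness this gives $y^T A y > 0$ for all $y \neq \boldsymbol 0$, i.e.\ $A$ is positive definite. An equivalent phrasing, which I might include as a remark, is that $A$ is obtained from $L$ by a principal submatrix that amounts to ``grounding'' the leader nodes, and a connected graph grounded at a nonempty set of nodes has a nonsingular (hence positive definite) reduced Laplacian.

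The only genuine subtlety — and the step I would be most careful about — is justifying that $\tilde y^T L \tilde y = 0$ together with $L \succeq 0$ implies $L\tilde y = 0$; this is standard (write $L = \sum_k \lambda_k q_k q_k^T$ with $\lambda_k \ge 0$, so $\tilde y^T L \tilde y = \sum_k \lambda_k (q_k^T \tilde y)^2 = 0$ forces $q_k^T \tilde y = 0$ whenever $\lambda_k > 0$, hence $L \tilde y = 0$), but it is worth stating explicitly rather than waving at it. Everything else is bookkeeping with the block structure, and the connectivity hypothesis is exactly what is needed to pin down $\ker L = \mathrm{span}\{\boldsymbol{1}_N\}$; the implicit assumption $N_\ell \ge 1$ (there is at least one leader) is what makes the conclusion strict. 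Since the statement is attributed to \cite{mesbahi2010graph}, a short self-contained argument along these lines is the natural thing to present.
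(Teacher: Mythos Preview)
Your argument is correct and is the standard proof of this fact. The paper itself does not supply a proof of the lemma; it simply states the result and attributes it to \cite{mesbahi2010graph}, so there is no ``paper's own proof'' to compare against. What you wrote is essentially the argument one would find in that reference: pad a putative null vector of $A$ with zeros on the leader coordinates, use $L\succeq 0$ and the quadratic-form identity to conclude the padded vector lies in $\ker L$, and then invoke connectivity to force it to be a multiple of $\boldsymbol{1}_N$, which the zero leader block rules out. Your explicit justification that $\tilde y^T L \tilde y = 0$ with $L\succeq 0$ implies $L\tilde y = 0$ via the spectral decomposition is a nice touch and closes the only place a careless reader might stumble. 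The implicit hypothesis $N_\ell \ge 1$ that you flag is indeed necessary and is assumed throughout the paper's leader--follower setup.
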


In summary, when partitioning the Laplacian matrix $L$ of an undirected connected graph, as demonstrated in equation \eqref{eq:Laplacian_Partition}, the matrix $A$ is revealed to be positive definite, ensuring the system's stability. This stability enables the computation of the Controllability Gramian $\mathcal{W}$, which serves as a valuable measure of controllability in terms of energy-related quantification. It also facilitates the derivation of various controllability statistics \cite{pasqualetti2014controllability,summers2015submodularity,wu2018benchmarking}. Some of these statistics are further discussed below.
\begin{itemize}
\item[i] {\textbf{Trace of $\mathcal{W}$}}: The trace of the controllability Gramian inversely correlates with the average control energy required to reach random target states. It also indicates the overall controllability across all directions within the state space.

\item[ii] \textbf{Minimum eigenvalue of $\mathcal{W}$}: This metric represents the worst-case scenario and demonstrates an inverse relationship with the control energy required to navigate the network in the least controllable direction.

\item[iii] \textbf{Rank of $\mathcal{W}$}: The rank of $\mathcal{W}$ corresponds to the dimension of the controllable subspace.

\item[iv] \textbf{Determinant of $\mathcal{W}$}: The metric $\text{ld}(\mathcal{W}) = \log \left(\prod_j \mu_j(\mathcal{W})\right)$, where $\mu_j(\mathcal{W})$ denotes a non-zero eigenvalue of $\mathcal{W}$, provides a volumetric assessment of the controllable subspace that can be accessed with one unit or less of control energy. 
\end{itemize}

\paragraph*{\textbf{Examples}} We illustrate, through examples, that network controllability is influenced by both the network's topological configuration and the placement of leaders within it. The impact of leader selection on network controllability is visualized in Figure \ref{fig:leader_effect}. In this scenario, we examine a network consisting of 10 agents, with one designated as the leader, resulting in $N_f = 9$. In Figure \ref{fig:leader_effect}(b), the dimension of the controllable subspace is 9, indicating complete controllability of the follower network. The edges connecting the leader and follower nodes, which define the structure of the $B$ matrix in \eqref{eq:follower_dynamics}, are highlighted in red. Transitioning to Figure~\ref{fig:leader_effect}(c), we opt for a different leader while preserving complete controllability; however, this results in a modified trace of $\mathcal{W}$.

\vspace{-0.15in}
\begin{figure}[!ht]
\centering
\begin{subfigure}[b]{0.22\textwidth}
\centering
\includegraphics[scale=0.115]{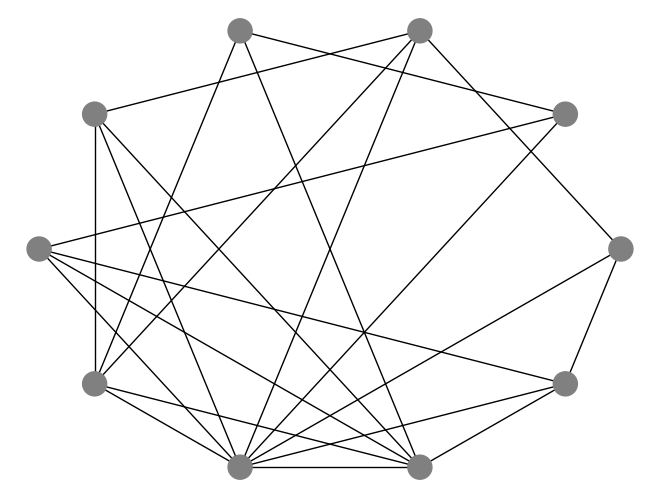}
\caption{input $G$}
\end{subfigure}
\begin{subfigure}[b]{0.22\textwidth}
\centering
\includegraphics[scale=0.115]{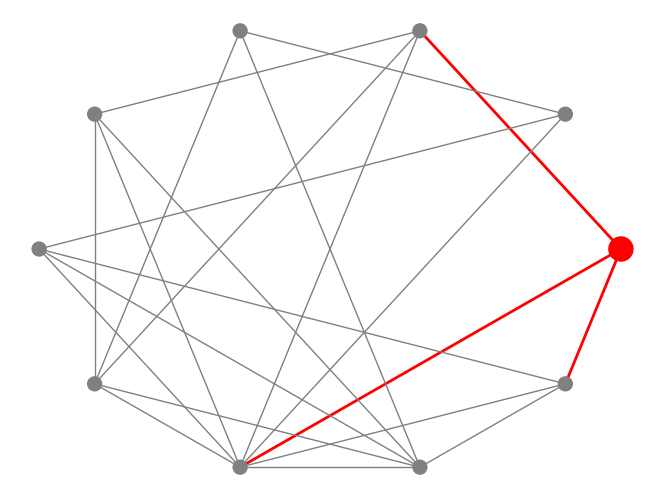}
\caption{rank = 9, tr = 1.5}
\end{subfigure}
\begin{subfigure}[b]{0.22\textwidth}
\centering
\includegraphics[scale=0.115]{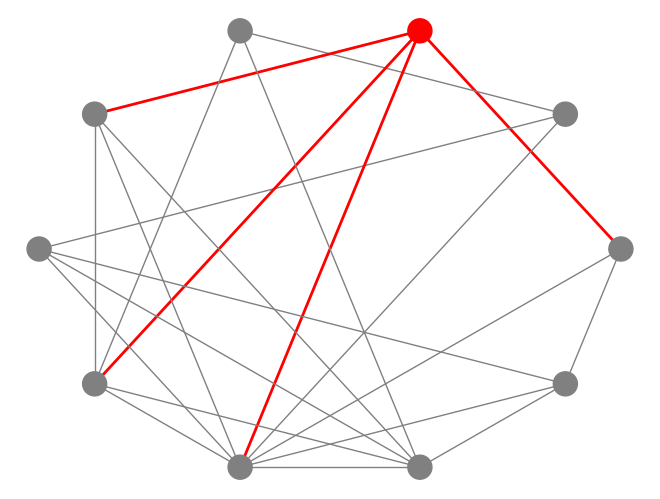}
\caption{rank = 9, tr = 3.5}
\end{subfigure}
\begin{subfigure}[b]{0.22\textwidth}
\centering
\includegraphics[scale=0.115]{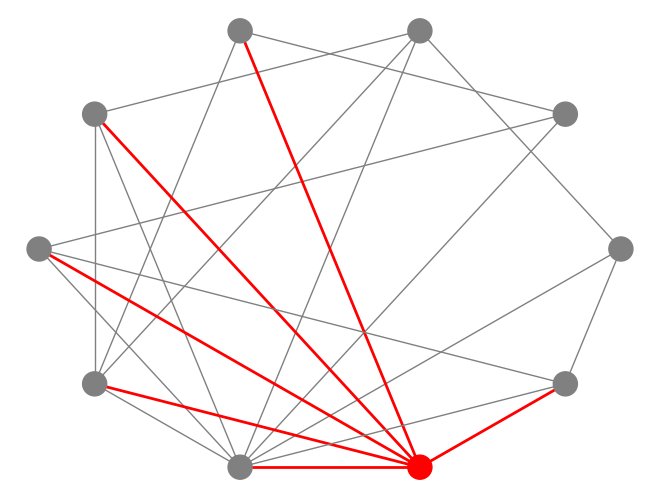}
\caption{rank = 8, tr = 7.1}
\end{subfigure}
\caption{Controllability metrics vary with leader selection}
\label{fig:leader_effect}
\end{figure}
%----------- Figure Ends -----------
To gather valuable insights into the graph structure, we employ an effective probing strategy. By varying the number and positions of leader nodes, we can observe the resulting controllability behavior, as quantified by metrics such as $\textbf{tr}(\mathcal{W})$, $\mu_j(\mathcal{W})$, $\textbf{rank}(\mathcal{W})$, and $\textbf{ld}(\mathcal{W})$. In our empirical evaluation in Section \ref{sec:emp_eval}, we primarily focus on the rank, trace, and both minimum and non-zero eigenvalues of the Gramian matrix, considering multiple leader set configurations.

%===================== SECTION: Control-Based Graph Augmentations ===========================
\section{Control-Based Graph Augmentations}
\label{sec:augmentation}

Contrastive Graph Representation Learning (CGRL) is a self-supervised technique that relies on augmented data to create positive and negative pairs. As discussed in the previous section, we have developed graph-level control embeddings that act as inputs to the encoder, minimizing the NT-Xent loss~\cite{oord2018representation}. In this section, we will introduce several innovative methods for data augmentation.

The primary purpose of data augmentation is to generate new data that is logically consistent while preserving the semantic labels. As depicted in Figure \ref{fig:block_dia}, CGRL incorporates a contrastive module into the conventional Graph Machine Learning (GML) architecture, introducing a contrastive loss for fine-tuning the models. This process involves contrasting the original graph with an augmentation graph using positive–positive and positive–negative pairs. The core idea behind CGRL is that the original graph should, to some extent, be equivalent to the augmentation graph. Therefore, each graph in the dataset should have precisely one corresponding positive pair (an equivalent graph) in the augmented dataset, while the remaining augmented graphs serve as negative pairs. During training, CGRL strives to optimize the similarities of positive–positive pairs to approach unity and the similarities of positive–negative pairs to approach zero.

Since the graph-level embeddings used in our work are based on control properties, the primary goal is to devise an augmentation technique that primarily preserves the control properties of the original graph in the augmented version.

\vspace{2 mm}
\noindent\fbox{\begin{varwidth}{\dimexpr\linewidth-1\fboxsep-2\fboxrule\relax}
    \begin{probs}
    \label{prob:Aug}
        Given a graph $G = (V, E)$ and a leader set $V_\ell$, perform an augmentation $\calT(G)$ to obtain $G^\prime$ by perturbing $k$ edges while ensuring that the controllability properties are preserved.
    \end{probs}
\end{varwidth}}
\vspace{1 mm}

In this context, \emph{edge perturbation} can refer to actions such as edge deletion, edge addition, or edge substitution.

While performing augmentation, our primary focus is on preserving one of the key features of our CTRL embedding, which is the rank of controllability denoted as $\gamma(G, V_\ell)$. However, preserving the exact rank of controllability is a highly complex task. Consequently, the literature has explored lower bounds on network controllability~\cite{chapman2013strong, yaziciouglu2016graph, monshizadeh2015strong}. In our edge perturbation techniques, we employ a rigorous lower bound based on topological node distances and introduce algorithms to augment the original network while preserving this lower bound~\cite{yaziciouglu2016graph}.

Assuming the presence of $m$ leaders denoted as $V_\ell = \{\ell_1, \ell_2, \cdots , \ell_m\}$ within a leader-follower network $G = (V,E)$, we define the \emph{distance-to-leader (DL) vector} for each vertex $v_i \in V$ as follows:
$$
D_i = \left[\begin{array}{lllll}
    d(\ell_1, v_i) & d(\ell_2, v_i) & \cdots & d(\ell_{m}, v_i)  \\
\end{array}\right]^T \in \mathbb{Z}^m.
$$
In this vector, the $j^{th}$ component, denoted as $[D_i]_j$, represents the distance between leader $\ell_j$ and vertex $v_i$. We then proceed to define a \emph{sequence} of distance-to-leader vectors, referred to as a \emph{pseudo-monotonically increasing sequence} (PMI), as described in \cite{yaziciouglu2016graph}.

\begin{definition} (\emph{Pseudo-monotonically Increasing Sequence (PMI))} A sequence $\calD = [\calD_1 \; \; \calD_2 \;\; \cdots \;\calD_k]$ of distance-to-leader vectors is a PMI if, for any vector $\calD_i$ in the sequence, there exists a coordinate $\pi(i)\in \{1, 2,\cdots, m\}$ such that
%------------- Equation begins -------------------
\begin{equation}
\label{eq:PMI}
    [\calD_i]_{\pi(i)} < [\calD_j]_{\pi(i)}, \; \forall j > i. 
    \end{equation}
\end{definition} 

In essence, the PMI property \eqref{eq:PMI} ensures that for each vector $\mathcal{D}_i$ in the PMI sequence, there is an index/coordinate $\pi(i)$ such that the values of all subsequent vectors at the coordinate $\pi(i)$ are strictly greater than $[\mathcal{D}_i]_{\pi(i)}$. 

The length of the PMI sequence provides a precise lower bound on the dimension of the controllable subspace $\gamma(G,V_\ell)$. This is presented in the subsequent result.

\begin{theorem}~\cite{yaziciouglu2016graph}
If we denote the length of the longest PMI sequence of DL vectors in a network $G = (V, E)$ with $V_\ell$ leaders as $\delta(G, V_\ell)$ or simply $\delta(G)$, then we can establish the inequality: $ \delta(G, V_\ell) \leq \gamma(G, V_\ell)$
% \begin{equation}
%         \label{eq:distance_bound}
%     \delta(G, V_\ell) \leq \gamma(G, V_\ell),
%     \end{equation}
Here, $\gamma(G,V_\ell)$ represents the dimension of the controllable subspace.
\end{theorem}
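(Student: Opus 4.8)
The plan is to prove the inequality by producing, inside (an extension of) the controllability matrix $\calC=[\,-B\mid (-A)(-B)\mid\cdots\,]$, a nonsingular square submatrix of order $\delta(G,V_\ell)$. Since $\gamma(G,V_\ell)=\operatorname{rank}\calC$ (as recalled above) and, by Cayley--Hamilton, the rank is unchanged if we append further blocks $(-A)^{s}(-B)$ with $s\ge N_f$, this will give $\delta(G,V_\ell)\le\gamma(G,V_\ell)$ at once. Fix a longest PMI sequence $\calD=[\calD_1\;\calD_2\;\cdots\;\calD_k]$ with $k=\delta(G,V_\ell)$, realized by follower vertices $v_1,\dots,v_k$, and for each $i$ single out a distinguishing coordinate $\pi(i)\in\{1,\dots,m\}$ as in \eqref{eq:PMI}, writing $r_i:=[\calD_i]_{\pi(i)}=d(\ell_{\pi(i)},v_i)\ge 1$. (We will in fact need to choose $\pi(i)$ carefully when several distinguishing coordinates are available; see below.)

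Form the $N_f\times k$ matrix $W$ whose $i$-th column is $(-A)^{r_i-1}(-B)\,e_{\pi(i)}$, the column of the (extended) controllability matrix associated with the power $r_i-1$ and leader $\ell_{\pi(i)}$, and let $\widetilde W$ be its $k\times k$ restriction to the rows indexed by $v_1,\dots,v_k$, rows and columns both listed in PMI order. The claim is that $\widetilde W$ is upper triangular with nonzero diagonal. This rests on a walk-counting reading of the system matrices: since the off-diagonal pattern of $-A$ is the adjacency of the follower-induced subgraph $G_f$ and $-B$ records follower--leader adjacencies, the $(v,\ell)$ entry of $(-A)^{s}(-B)$ is a signed sum over $\ell$--$v$ walks of length $s+1$ that leave $\ell$ on the first step and stay among followers thereafter; in particular it vanishes whenever $d(\ell,v)>s+1$, since any such walk has length at least $d(\ell,v)$. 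Applying this with $s=r_j-1$: for $i>j$ the PMI property \eqref{eq:PMI} forces $d(\ell_{\pi(j)},v_i)=[\calD_i]_{\pi(j)}>[\calD_j]_{\pi(j)}=r_j$, hence $\widetilde W_{ij}=0$, so $\widetilde W$ is upper triangular. For the diagonal one checks that no neighbour of $\ell_{\pi(i)}$ can lie at $G$-distance less than $r_i-1$ from $v_i$, so the contributions making up $\widetilde W_{ii}$ are all of one sign (no cancellation), and $\widetilde W_{ii}\ne 0$ precisely when some neighbour of $\ell_{\pi(i)}$ reaches $v_i$ by a follower-only walk of length $r_i-1$. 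Granting this, $\det\widetilde W\ne 0$, so $\operatorname{rank}\calC\ge k$ and the theorem follows.

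The routine parts are the rank reduction and the index bookkeeping; the substantive work, and the step I expect to be the main obstacle, is making the walk-counting lemma airtight. Two points require care. First, because the diagonal of $-A$ carries (full-graph) degrees, ``stay'' moves can pad out walks, so one must isolate the lowest-order term of an off-diagonal entry of a power of $-A$ to identify it with shortest-path counts in $G_f$. Second, and more delicate: the diagonal entries are computed with walks that must remain in $G_f$ after the first step, whereas a $G$-geodesic from $\ell_{\pi(i)}$ to $v_i$ may pass through another leader; one therefore has to choose the distinguishing coordinate $\pi(i)$ so that some geodesic is realizable within $G_f$ (simple examples show a naive choice can make $\widetilde W_{ii}=0$), or otherwise invoke the finer distance-to-controllability correspondence of \cite{yaziciouglu2016graph}. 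Once the lemma and this choice of coordinates are secured, the triangularity computation closes the argument.
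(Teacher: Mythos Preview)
The paper does not supply a proof of this theorem; it is quoted from \cite{yaziciouglu2016graph} without argument, so there is no in-paper proof to compare your proposal against.

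That said, your outline is the right one and mirrors the argument of the cited reference: for each term $\calD_i$ of a longest PMI sequence pick the column $(-A)^{r_i-1}(-B)e_{\pi(i)}$ of the (extended) controllability matrix, restrict to the rows indexed by the PMI vertices, and show the resulting $k\times k$ block is triangular with nonzero diagonal. The triangularity step is exactly as you describe, since the support of $(-A)^{s}(-B)e_\ell$ lies in the $G$-ball of radius $s{+}1$ about $\ell$ and \eqref{eq:PMI} then forces the subdiagonal to vanish. You have also correctly isolated the only genuine subtlety, the nonvanishing of $\widetilde W_{ii}$, and your diagnosis is accurate: in the follower-reduced model a careless choice of $\pi(i)$ can give $\widetilde W_{ii}=0$, because every $G$-geodesic from $\ell_{\pi(i)}$ to $v_i$ may pass through another leader and so fail to live in $G_f$ after the first step. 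The fix you conjecture works: take $\pi(i)$ to minimise $r_i$ among all coordinates satisfying \eqref{eq:PMI}. If some leader $\ell'$ lay on a geodesic from $\ell_{\pi(i)}$ to $v_i$, then $d(\ell',v_i)<r_i$, while for every $j>i$ the triangle inequality gives $d(\ell',v_j)\ge d(\ell_{\pi(i)},v_j)-d(\ell_{\pi(i)},\ell')>r_i-\bigl(r_i-d(\ell',v_i)\bigr)=d(\ell',v_i)$, so $\ell'$ would itself be a valid distinguishing coordinate with strictly smaller distance, contradicting minimality. Hence some geodesic has follower interior, all length-$(r_i{-}1)$ walk contributions carry the same sign, $\widetilde W_{ii}\ne 0$, and your argument closes.
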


We propose the following three sophisticatedly designed edge perturbation methods that maintain the lower bound $\delta(G, V_\ell)$ on the rank of controllability. They are illustrated in Figure \ref{fig:augmentation}. The red vertices represent leaders, the gray dashed edge can be removed, and the blue dashed edge can be added while ensuring that the bound $\delta(G, V_\ell) = N_f = 4$ is maintained for all augmented graphs.
%----------- Figure Begins -----------
\vspace{-0.15in}
\begin{figure}[!ht]
\centering
\begin{subfigure}[b]{0.22\textwidth}
\centering
\includegraphics[scale=0.115]{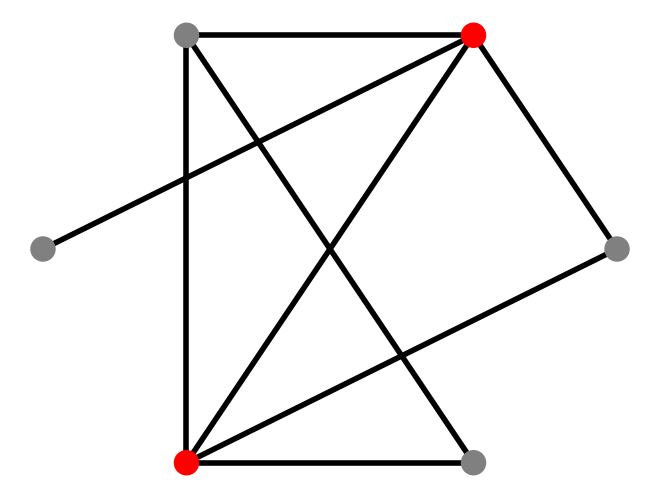}
\caption{input $G$}
\end{subfigure}
\begin{subfigure}[b]{0.22\textwidth}
\centering
\includegraphics[scale=0.115]{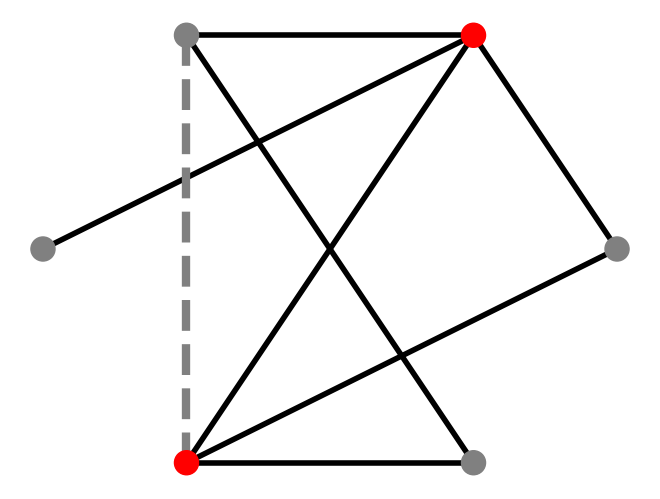}
\caption{Edge deletion}
\end{subfigure}
\begin{subfigure}[b]{0.22\textwidth}
\centering
\includegraphics[scale=0.115]{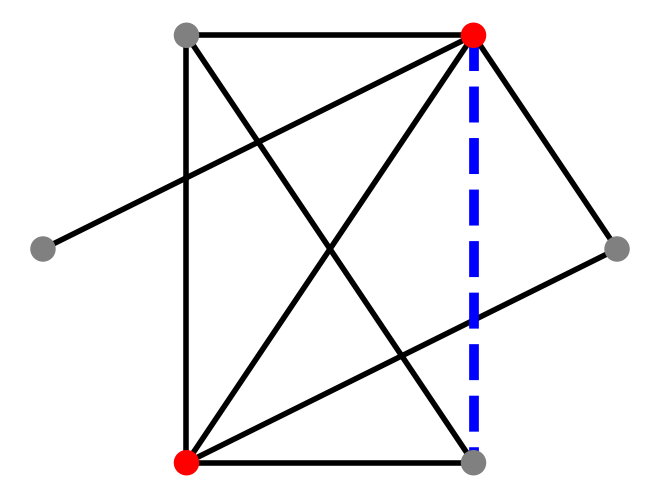}
\caption{Edge addition}
\end{subfigure}
\begin{subfigure}[b]{0.22\textwidth}
\centering
\includegraphics[scale=0.115]{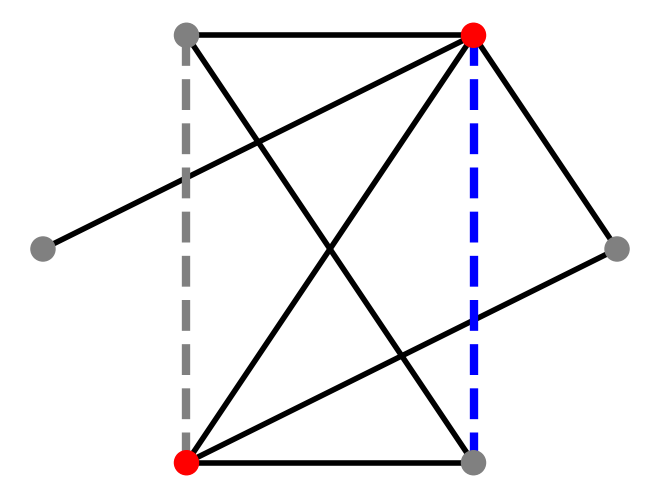}
\caption{Edge substitution}
\end{subfigure}
\caption{Control-based graph augmentations where $\delta = \gamma = 4$ for original and augmented graphs.}
\label{fig:augmentation}
\end{figure}
%-------------------------------
\subsection{Egde Deletion} 
\label{subsec:edge_del}
%-------------------------------
We propose using the concept of discerning essential edges, referred to as controllability backbone edges that we introduced in ~\cite{ahmad2023controllability}, which do not decrease $\delta(G, V_\ell)$. We begin by introducing the concept of the distance-based controllability backbone and subsequently present an algorithm for computing an augmented graph with $k$-perturbed edges while utilizing the controllability backbone.
% As mentioned above, the precise determination of edges of $G$ required to maintain the exact controllability rank can be a computationally demanding task. 
% Therefore, we introduce the distance-based bound on the dimension of controllable subspace~\cite{yaziciouglu2016graph}. We then frame the notion of the distance-based controllability backbone and provide an algorithm to compute an augmented graph using the controllability backbone.
% For a more comprehensive understanding of this methodology, we recommend consulting~\cite{ahmad2023controllability}.

\begin{definition} (\emph{Controllability Backbone})\cite{ahmad2023controllability} For a given graph $G = (V, E)$ and a set of leaders $V_\ell$, the controllability backbone is represented as a subgraph $B = (V, E_{B})$. In this subgraph $B$, the condition $            \delta(G, V_\ell) \leq \delta(\hat{G}, V_\ell).$ holds for any subgraph $\hat{G} = (V, \hat{E})$ where the edge set satisfies $E_{B} \subseteq \hat{E} \subseteq E$.
 % \begin{equation*}
 %            \delta(G, V_\ell) \leq \delta(\hat{G}, V_\ell).
 %        \end{equation*}    
\end{definition}

In other words, for any subgraph $\hat{G} = (V, \hat{E})$ of $G$ that includes the backbone edges $E_B$, it ensures that at least the same level of controllability as the original graph $G$ is maintained. Essentially, retaining the backbone edges guarantees that controllability remains unchanged or improves within any subgraph $\hat{G}$.

Now, we present Algorithm \ref{alg:edge_deletion} to perform graph augmentation by deleting edges from a graph. First, we compute the important edges (of the controllability backbone) that need to be preserved to maintain the minimum bound on controllability. The controllability backbone can be found by using Algorithm 2 of \cite{ahmad2023controllability}. Then, we calculate a set of potential edges in the given graph $G$ that does not include any edges from the backbone graph $B$. Finally, we randomly select $k$ edges from the set of potential edges and remove them from the original graph $G$ to obtain an augmented graph $G^\prime$. If $k$ is larger than the set of potential edges, we delete all the edges of the potential edge set.

%------------- Algorithm --------------------
\begin{algorithm}[ht]
\caption{$Edge\_Deletion$}
\label{alg:edge_deletion}
    \begin{algorithmic}[1]
    \renewcommand{\algorithmicrequire}{\textbf{Input:}}
    \renewcommand{\algorithmicensure}{\textbf{Output:}}
    \Require $G = (V, E)$, $V_\ell$, $k$
    \Ensure $G^\prime = (V, E^\prime)$, $|E| - |E^\prime| = k$
        \State Compute the distance-based controllability backbone $B = (V, E_B)$ for $G = (V, E)$ and $V_\ell$.
        \State $pot\_edges \gets E \setminus E_B$  \texttt{\% Set of potential edges}.
        \State $E_{pot} \gets$ randomly selected $k$ edges from $pot\_edges$ 
        \State $E^\prime \gets E\setminus E_{pot}$\\
        \Return $G^\prime = (V, E^\prime)$
    \end{algorithmic}
\end{algorithm}

\begin{prop}
\label{prop:edge_del}
Given a graph $G=(V,E)$ and a leader set $V_\ell$, Algorithm \ref{alg:edge_deletion} returns an augmented graph $G^\prime=(V,E^\prime)$, where $E^\prime \subseteq E$, while ensuring $
    \delta(G, V_\ell) \leq \delta(G^\prime, V_\ell)$.
\end{prop}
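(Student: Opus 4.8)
The plan is to show that the edge set $E^\prime$ produced by Algorithm~\ref{alg:edge_deletion} always contains the backbone edge set $E_B$, and then invoke the defining property of the controllability backbone (the Definition immediately preceding Algorithm~\ref{alg:edge_deletion}) to conclude $\delta(G,V_\ell)\leq \delta(G^\prime,V_\ell)$. Concretely, I would proceed as follows.

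First, I would establish the containment chain $E_B \subseteq E^\prime \subseteq E$. The inclusion $E^\prime \subseteq E$ is immediate since $E^\prime = E \setminus E_{pot}$ with $E_{pot}\subseteq pot\_edges \subseteq E$. For the lower inclusion, note that by line~2 of the algorithm $pot\_edges = E \setminus E_B$, so $pot\_edges \cap E_B = \emptyset$; since $E_{pot}\subseteq pot\_edges$ (line~3), we also have $E_{pot}\cap E_B = \emptyset$. Therefore removing $E_{pot}$ from $E$ cannot remove any edge of $E_B$, i.e. $E_B \subseteq E \setminus E_{pot} = E^\prime$. This step is essentially bookkeeping and carries through identically in the degenerate case where $k$ exceeds $|pot\_edges|$ and the algorithm deletes all of $pot\_edges$: even then $E_{pot} = pot\_edges$ is still disjoint from $E_B$, so $E^\prime = E_B$ and the inclusion still holds.

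Second, I would apply the controllability backbone guarantee: by definition, the backbone $B=(V,E_B)$ of $G$ with leaders $V_\ell$ satisfies $\delta(G,V_\ell)\leq \delta(\hat G,V_\ell)$ for \emph{every} subgraph $\hat G = (V,\hat E)$ with $E_B \subseteq \hat E \subseteq E$. Taking $\hat G = G^\prime = (V,E^\prime)$, which is legitimate precisely because of the containment chain established in the previous step, yields $\delta(G,V_\ell)\leq \delta(G^\prime,V_\ell)$, which is the claimed inequality. I would also note in passing that the correctness of the backbone computation itself is inherited from Algorithm~2 of~\cite{ahmad2023controllability}, which we may assume as a black box. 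Combining with Theorem~3.1, $\delta(G^\prime,V_\ell)$ is in turn a valid lower bound on the true controllable-subspace dimension $\gamma(G^\prime,V_\ell)$, so the augmentation indeed preserves the controllability lower bound that the $CTRL$ embedding relies on.

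There is no genuine obstacle here — the proposition is a direct corollary of the backbone definition once the set-theoretic inclusion is checked. The only point requiring a sentence of care is the edge case $k > |pot\_edges|$, where one must confirm the algorithm's fallback still outputs a subgraph sandwiched between $E_B$ and $E$; everything else is immediate.
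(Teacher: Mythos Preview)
Your proposal is correct and follows essentially the same approach as the paper: both arguments hinge on the fact that the algorithm only removes edges outside the controllability backbone $E_B$, so the output $G'$ satisfies $E_B\subseteq E'\subseteq E$ and the backbone guarantee yields $\delta(G,V_\ell)\le\delta(G',V_\ell)$. Your version is arguably cleaner in that you invoke the backbone \emph{definition} from the paper directly (which already covers any intermediate subgraph), whereas the paper's own proof routes through Theorem~4.2 of~\cite{ahmad2023controllability} and argues edge-by-edge before extending to subsets; but this is a presentational difference, not a different strategy.
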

\begin{proof}
Let $G = (V, E)$ be a graph with a set of leaders $V_\ell$. The backbone graph $B = (V, E_B)$ is defined as the smallest set of essential edges required to maintain the necessary distances within the graph for establishing the lower bound on controllability, as formally described in Theorem 4.2 of \cite{ahmad2023controllability}.
The theorem establishes that for any edge $e$ present in $G$ but absent in $B$, the removal of edge $e$ does not diminish the lower bound on $\gamma(G, V_\ell)$. Therefore, any edge that is not part of the backbone graph $B$ can be eliminated from $G$ without reducing the lower bound on the rank of the controllability matrix.
Furthermore, this property implies that the removal of any subset of these non-backbone edges from $G$ continues to maintain the bound on the rank of the controllability matrix.
\end{proof}

%-------------------------------
\subsection{Egde Addition}
\label{subsec:edge_addition}
%-------------------------------
% \textcolor{blue}{Add leader-to-leader edges for as they will maintain the ZFS bound from \cite{patel2023distributed}.}

Building on the concept of recognizing removable edges while preserving the distance-based bound $\delta(G, V_\ell)$, as proposed in our previous work~\cite{abbas2020improving}, we use an augmentation method that determines the edges that can be added to a given graph $G = (V, E)$ while still maintaining the distance-based bound $\delta(G, V_\ell)$. We employ the same approach outlined in \cite{abbas2020improving} to identify edges that can be added to $G$ without diminishing the distance-based bound $\delta(G, V_\ell)$. After determining such potential edges that can be safely incorporated into $G$ without reducing the controllability rank $\gamma(G, V_\ell)$, we randomly select $k$ such potential edges and introduce them into $G$, resulting in the augmented graph $G^\prime = (V, E^\prime)$.

% %------------- Algorithm --------------------
% \begin{algorithm}[h]
% \caption{Graph augmentation using edge addition}
% \label{alg:edge_addition}
%     \begin{algorithmic}[1]
%     \renewcommand{\algorithmicrequire}{\textbf{Input:}}
%     \renewcommand{\algorithmicensure}{\textbf{Output:}}
%     \Require $G = (V, E)$, $V_\ell$, edges $k$
%     \Ensure $G^\prime = (V, E^\prime)$
%         \State Compute the maximal edge set $E_{max}$ from Algorithm 1 of \cite{abbas2020improving}.
%         \State $pot\_edges$ $\gets$ $E_{max} - E$  \% \texttt{Set of potential edges}.
%         \State Randomly select $k$ edges from $pot\_edges$ and add to $E$ to form $E^\prime$.\\
%         \Return $G^\prime = (V, E^\prime)$
%     \end{algorithmic}
% \end{algorithm}

\begin{prop} 
% (Proposition 4.1 in \label{prop:edge_add}
\cite{abbas2020improving}
% )
    If $\delta(G, V_\ell)$ serves as a lower bound for the dimension of the controllable subspace $\gamma(G, V_\ell)$ of a graph \(G = (V, E)\) with leaders \(V_\ell \subset V\), then it also serves as a lower bound for \(\gamma(G^\prime, V_\ell)\) of an augmented graph \(G^\prime = (V, E^\prime)\), where \(E \subseteq E^\prime\) and \(E^\prime\) contains the edges that preserve the distances of DL vectors in the longest PMI sequence of \(G\).
\end{prop}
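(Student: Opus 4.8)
The plan is to reduce Proposition 4.9 to the already-established fact (from~\cite{abbas2020improving}) that augmenting a graph with edges that do not change the distances of DL vectors appearing in the longest PMI sequence cannot destroy that sequence, and hence cannot decrease the bound $\delta(G, V_\ell)$. So the first step is to fix notation: let $\mathcal{D} = [\mathcal{D}_1\; \mathcal{D}_2 \;\cdots\; \mathcal{D}_{\delta}]$ be a longest PMI sequence of DL vectors in $G$, realizing $\delta(G, V_\ell)$ vertices $v_{i_1}, \ldots, v_{i_\delta}$ with associated pivot coordinates $\pi(1), \ldots, \pi(\delta)$ satisfying~\eqref{eq:PMI}. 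The key observation is that the PMI property only depends on the \emph{values} $[\mathcal{D}_i]_{\pi(i)} = d(\ell_{\pi(i)}, v_{i_t})$, i.e.\ on certain entries of the distance-to-leader vectors of the sequence vertices.

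The second step is to show that adding edges can only decrease distances, never increase them: for any $E \subseteq E'$ and any pair of vertices, $d_{G'}(u,w) \le d_G(u,w)$, since every walk in $G$ is still a walk in $G'$. By hypothesis, $E'$ is chosen so that for the sequence vertices $v_{i_t}$ and their pivot leaders $\ell_{\pi(t)}$, the distances $d(\ell_{\pi(t)}, v_{i_t})$ are \emph{preserved} under the augmentation. The third step is to verify that the PMI inequalities still hold in $G'$. Take any $i < j$ in the sequence; in $G$ we have $[\mathcal{D}_i]_{\pi(i)} < [\mathcal{D}_j]_{\pi(i)}$, i.e.\ $d_G(\ell_{\pi(i)}, v_{i_i}) < d_G(\ell_{\pi(i)}, v_{i_j})$. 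The left side is preserved, so it equals $d_{G'}(\ell_{\pi(i)}, v_{i_i})$; the right side can only shrink, so $d_{G'}(\ell_{\pi(i)}, v_{i_j}) \le d_G(\ell_{\pi(i)}, v_{i_j})$. This is the delicate point, because a naive argument would fail: shrinking the right-hand side could in principle violate the strict inequality. Here one must invoke the precise selection rule from~\cite{abbas2020improving}, which guarantees that the added edges preserve not just the pivot distances of the sequence vertices but all distances along the longest PMI sequence that are needed to keep every inequality~\eqref{eq:PMI} intact; with that, $d_{G'}(\ell_{\pi(i)}, v_{i_i}) < d_{G'}(\ell_{\pi(i)}, v_{i_j})$ follows.

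Hence $\mathcal{D}$ remains a PMI sequence of DL vectors in $G'$ of length $\delta(G, V_\ell)$, so the longest PMI sequence in $G'$ has length at least $\delta(G, V_\ell)$, i.e.\ $\delta(G, V_\ell) \le \delta(G', V_\ell)$. Combining with Theorem~4.6 applied to $G'$, namely $\delta(G', V_\ell) \le \gamma(G', V_\ell)$, yields $\delta(G, V_\ell) \le \gamma(G', V_\ell)$, which is the claim. The main obstacle is the third step: making precise exactly which distances the edge-selection procedure of~\cite{abbas2020improving} must preserve so that \emph{all} the strict PMI inequalities survive the (possibly many) distance reductions caused by the new edges — this is where the hypothesis ``$E'$ contains the edges that preserve the distances of DL vectors in the longest PMI sequence of $G$'' must be read carefully and used in full strength, and I would cite the relevant lemma of~\cite{abbas2020improving} rather than re-derive it.
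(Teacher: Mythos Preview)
The paper does not actually prove this proposition: it is stated with the citation~\cite{abbas2020improving} and the argument is deferred entirely to that reference. Your sketch is a reasonable reconstruction of what the cited proof contains, and its overall architecture --- exhibit the longest PMI sequence $\mathcal{D}$ of $G$ as a PMI sequence in $G'$, then invoke the bound $\delta(G',V_\ell)\le\gamma(G',V_\ell)$ --- is exactly right.

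Where you overcomplicate matters is the third step. The hypothesis is that the added edges ``preserve the distances of DL vectors in the longest PMI sequence of $G$'': this means \emph{every} entry of every sequence DL vector is unchanged, not only the pivot entries. In the inequality $[\mathcal{D}_i]_{\pi(i)} < [\mathcal{D}_j]_{\pi(i)}$ with $j>i$, the right-hand side is $d(\ell_{\pi(i)}, v_{i_j})$, the $\pi(i)$-th coordinate of the DL vector of the \emph{sequence vertex} $v_{i_j}$, and is therefore also preserved exactly. There is no ``shrinking of the right-hand side'' to worry about: the entire sequence $\mathcal{D}$, with all its numerical entries, carries over verbatim to $G'$, and each strict inequality~\eqref{eq:PMI} holds by identity. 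The paper uses precisely this reading elsewhere --- see the proof of the edge-substitution proposition, which states that ``the distances between leaders and nodes in $V_{\mathcal{D}}$ are exactly the same.'' So your ``delicate point'' dissolves once the hypothesis is read in full strength, and no auxiliary lemma from~\cite{abbas2020improving} is needed beyond the edge-selection rule that guarantees this preservation.
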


%-------------------------------
\subsection{Egde Substitution}
%-------------------------------
Next, we propose a novel approach that combines the edge deletion and edge addition methods while preserving the size of the edge set $|E|$ of the given graph $G = (V, E)$ and the bound $\gamma(G, V_\ell)$. Algorithm \ref{alg:edge_substitution} outlines this approach. First, we remove $k$ edges from $G$ to create $\bar{G} = (V, \bar{E})$ using Algorithm \ref{alg:edge_deletion}. Then, we introduce $k$ distinct edges into $\bar{G}$ using the method described in \ref{subsec:edge_addition}, resulting in $G^\prime = (V, E^\prime)$, where $|E| = |E^\prime|$.

%------------- Algorithm --------------------
\begin{algorithm}[ht]
\caption{$Edge\_Substitution$}
\label{alg:edge_substitution}
    \begin{algorithmic}[1]
    \renewcommand{\algorithmicrequire}{\textbf{Input:}}
    \renewcommand{\algorithmicensure}{\textbf{Output:}}
    \Require $G = (V, E)$, $V_\ell$, $k$
    \Ensure $G^\prime = (V, E^\prime)$
   \State $\bar{G} = (V, \bar{E}) \gets$ $Edge\_Deletion(G, V_\ell, k)$
        \State Compute the maximal edge set $E_{max}$ for ${G} = (V, {E})$ and $V_\ell$.
        \State $pot\_edges$ $\gets$ $E_{max} - {E}$  \% \texttt{Set of potential edges}.
        \State $E_{pot} \gets$ randomly selected $k$ edges from $pot\_edges$ 
        \State $E^\prime \gets \bar{E} \cup E^\prime$.\\
        \Return $G^\prime = (V, E^\prime)$  \end{algorithmic}
\end{algorithm}

The maximal edge set $E_{max}$ can be computed by from Algorithm 1 of our previous work \cite{abbas2020improving}.

\begin{prop}
Given a graph $G=(V,E)$ and a leader set $V_\ell$, Algorithm \ref{alg:edge_substitution} yields an augmented graph $G^\prime=(V,E^\prime)$, where $|E^\prime| = |E|$, ensuring that $\delta(G, V_\ell) \leq \delta(G^\prime, V_\ell)$,
\end{prop}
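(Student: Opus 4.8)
The plan is to chain the two propositions already established: the edge-deletion guarantee (Proposition \ref{prop:edge_del}) and the edge-addition guarantee (the proposition attributed to \cite{abbas2020improving}). Algorithm \ref{alg:edge_substitution} is precisely the composition of these two operations — first delete $k$ non-backbone edges via $Edge\_Deletion$, then add $k$ edges drawn from the maximal safe edge set $E_{max}$ — so the proof should simply track the lower bound $\delta(\cdot, V_\ell)$ through each stage and observe that the edge count returns to its original value.

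First I would set up notation: let $\bar{G} = (V, \bar{E})$ be the intermediate graph returned by $Edge\_Deletion(G, V_\ell, k)$, so $\bar{E} \subseteq E$ with $|\bar E| = |E| - k$ (or all potential edges removed if fewer than $k$ exist). By Proposition \ref{prop:edge_del}, $\delta(G, V_\ell) \leq \delta(\bar G, V_\ell)$. Next I would apply the edge-addition result: the $k$ edges added in lines 2--5 are selected from $E_{max} \setminus E$, i.e., edges that preserve the distances of the DL vectors in the longest PMI sequence of $G$. The subtle point is that the addition guarantee of \cite{abbas2020improving} must be invoked relative to $\bar G$ rather than $G$: since adding edges can only decrease pairwise distances, and $\bar G$ is a subgraph of $G$, any edge that is ``safe'' for $G$ (does not shorten the relevant DL distances) remains safe when added to $\bar G$ — the longest PMI sequence of $G$ still certifies a PMI sequence of length $\delta(G,V_\ell)$ in $G' = (V, E')$. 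Hence $\delta(G, V_\ell) \leq \delta(G', V_\ell)$. Finally, $|E'| = |\bar E| + k = |E|$ when $k$ potential edges were available at both stages, which is the cardinality claim.

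The main obstacle — and the step deserving the most care — is justifying that the edge-addition guarantee transfers from $G$ to the already-pruned graph $\bar G$. Proposition attributed to \cite{abbas2020improving} is stated for augmenting $G$ itself with edges from its maximal set; here we compute $E_{max}$ from $G$ but add those edges to $\bar G$. I would argue this is valid by observing that the PMI lower bound depends only on the existence of a PMI sequence of DL vectors, and the DL distances used to certify the longest PMI sequence of $G$ are not shortened either by deleting non-backbone edges (which, by the backbone property, preserves exactly those distances needed for the bound) or by adding edges from $E_{max}$ (which by construction do not shorten distances along the certifying sequence). One should also handle the degenerate case where fewer than $k$ potential edges exist in either stage — here the cardinality equality $|E'| = |E|$ may fail, so the statement is cleanest under the assumption that $k$ admissible edges exist at both steps, and I would note this explicitly or phrase the conclusion as $|E'| \le |E|$ with equality generically. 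Modulo this bookkeeping, the proof is a two-line composition of the preceding propositions.
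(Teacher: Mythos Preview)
Your proposal is correct and follows essentially the same line as the paper's proof. Both arguments reduce to the observation that the distance-to-leader values certifying the longest PMI sequence $\mathcal{D}$ of $G$ are preserved by each operation: deleting non-backbone edges leaves those distances intact (backbone property), and adding edges from $E_{max}$ computed for $G$ does not shorten them (by construction of $E_{max}$); hence $\mathcal{D}$ remains a valid PMI sequence in $G'$, giving $\delta(G,V_\ell)\le\delta(G',V_\ell)$. You are in fact slightly more careful than the paper in two respects: you explicitly flag that a naive chaining $\delta(G)\le\delta(\bar G)\le\delta(G')$ does not go through directly (since $E_{max}$ is computed from $G$, not $\bar G$) and correctly resolve this by returning to the distance-preservation argument, and you note the degenerate case where fewer than $k$ admissible edges exist, which the paper's statement and proof silently assume away.
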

\begin{proof}    
Let $\calD$ be the longest PMI sequence of length $\delta(G, V_\ell)$ in $G = (V, E)$ and $V_\calD \subseteq V$ are the nodes whose DL vectors are in $\calD$. If we remove $k$ edges from $G$ to form $\bar{G} = (V, \bar{E})$, then by Proposition \ref{prop:edge_del}, $\delta(\bar{G}, V_\ell) \leq \delta(G, V_\ell)$ i.e. the longest PMI sequence of $G$ is a subsequence of the longest PMI sequence of $\bar{G}$ as the distances between leaders and
nodes in $V_\calD$ are exactly the same in $G$ and $\bar{G}$.

Now, from Algorithm 1 of \cite{abbas2020improving}, we find a maximal set of edges $E_{max}$ for the original graph $G = (V, E)$. This edge set contains all the edges that can be added to $G$ while preserving the distances between leaders and nodes in $V_\calD$. 
Hence, any edge added to $G$ from $E_{max}$ will maintain the lower bound $\delta(G, V_\ell)$ for $\gamma(G, V_\ell)$. Hence, by deleting $k$ edges that are mutually exclusive from the controllability backbone and adding $k$ edges that keep the distances between leaders and nodes in $V_\calD$ the same, we can substitute $k$ edges in $G$ for given $V_\ell$ such that $\delta(G, V_\ell) \leq \delta(G^\prime, V_\ell)$.
\end{proof}

These edge perturbation methods are employed to create positive pairs. Subsequently, we use the CTRL embeddings to generate graph representations for these pairs and apply the NT\_Xent loss for unsupervised learning of representations for each graph within the dataset. In the following section, we conduct an empirical assessment using real-world graph datasets. Our proposed approach is numerically evaluated through the task of graph classification and compared with state-of-the-art methods.

%===================== SECTION: Experimental Evaluation ===========================
\section{Numerical Evaluation}
\label{sec:emp_eval}

\subsection{Benchmark Datasets}
{\color{black}
\textbf{Datasets:} We conducted experiments on 7 standard graph classification benchmark datasets, which include MUTAG, PTC\_MR, PROTEINS, and DD, representing bioinformatics datasets, as well as IMDB-BINARY, IMDB-MULTI, and COLLAB, representing social network datasets \cite{Morris+2020}. The bioinformatics datasets provide descriptions of small molecules and chemical compounds. Among the social network datasets, IMDB-BINARY and IMDB-MULTI describe actors' ego-networks, while COLLAB is a scientific collaboration dataset where graphs consist of researchers as nodes and their collaborations as edges. Basic dataset statistics are provided in Table \ref{tab:dataset-stats}.

\begin{table}[!t]
\centering
% \tiny
\caption{Statistics of the datasets. Number of graphs, average number of nodes and edges, range of number of vertices, and the number of classes.}
\label{tab:dataset-stats}
\resizebox{\columnwidth}{!}{%
\begin{tabular}{l|ccccc}
\hline
\textbf{Dataset} & \textbf{\#Graphs} & $\mathbf{avg.|V|}$ & $\mathbf{avg.|E|}$ & $ \mathbf{Range (|V|)} $ & \textbf{\#Classes} \\ \hline
MUTAG & 188 & 17.93 & 19.79 & 10-28 & 2  \\ 
PTC & 344 & 14.29 & 14.69 & 2-109 & 2  \\ 
PROTEINS & 1113 & 39.06 & 72.82 & 4-620 & 2  \\ 
DD & 1178 & 284.32 & 715.66 & 30-743 & 2 \\ 
COLLAB & 5000 & 284.32 & 715.66 & 32-492 & 3 \\ 
IMDB-B & 1000 & 19.77 & 96.53 & 12-136 & 2 \\ 
IMDB-M & 1500 & 13.00 & 65.94 & 7-89 & 3   \\ \hline
\end{tabular}
}
\vspace{-0.2in}
\end{table}
}

% ---------------COMMENT------------------
\begin{comment}    
{\color{red}
\subsection{CTRL Embeddings}
\label{subsec:ctrl_embed} 
For a given graph, we compute the graph-level control embeddings $CTRL$ which are further fed into an encoder for contrastive learning. Here, we briefly described the control metrics comprising the $CTRL$ embeddings.

For the molecular datasets i.e. MUTAG, PTC, we use node types for leader selection, however, for the other datasets, we fixed the sizes of the leader sets and randomly chose them in a network. We then compute the Gramian matrix for each leader-follower combination and use its rank, trace, and minimum and non-zero eigenvalues. We also add some simple graph statistics to our $CTRL$ embedding including $N$, $|E|$, the number of bi-connected components, a fixed number of eigenvalues of the Laplacian matrix, eccentricity energy~\cite{wang2018anti}, and cycles information.
}
\end{comment}    

%-------------------------------
\subsection{The Role of Data Augmentation in Graph CL}
%-------------------------------
We evaluate the effectiveness of our proposed framework for graph classification using the TUDataset benchmark~\cite{Morris+2020}. We utilize the CL method to unsupervisedly learn representations $z(.)$ from CTRL embeddings, followed by the evaluation of these representations for graph-level classification. This evaluation involves training and testing a linear SVM classifier using the acquired representations. We employ a 10\% label rate and 10-fold cross-validation, conducting experiments over 5 repetitions and reporting the evaluation accuracy as a mean value along with the standard deviation.

As a baseline reference, we directly employ the CTRL embeddings for training the SVM classifier. The results for four distinct bioinformatics datasets are summarized in Table \ref{tab:augmentations}. Edge deletion yields the best result on MUTAG and PTC, while on PROTEINS and DD, edge addition and substitution provide the best results, respectively. We vary the number of edges perturbed from 1 to 3. Our augmentation techniques, combined with contrastive learning, consistently yield higher classification accuracies across all datasets compared to the baseline approach.
\begin{table}[ht]
\centering
\caption{Graph Classification accuracy (\%) with 10\% label rate. The baseline involves a linear SVM trained directly on CTRL embeddings.}
\label{tab:augmentations}
\resizebox{\columnwidth}{!}{%
\begin{tabular}{p{1.5cm}l|lcccc} % centralizeing the columns @anwar
\hline
\multicolumn{2}{l|}{\textbf{Method}}  & \textbf{MUTAG}  & \textbf{PTC} & \textbf{PROTEINS} & \textbf{DD} \\ 
\hline
\multicolumn{2}{l|}{Baseline}   & 75.86 $\pm$ 11.0 & 52.85 $\pm$ 9.5 & 58.72 $\pm$ 11.9 & 59.10 $\pm$ 13.9 \\
\multicolumn{2}{l|}{CGCL} & 79.54 $\pm$ 11.0 & 56.10 $\pm$ 8.3 & 69.97 $\pm$ 4.5 & 63.22 $\pm$ 11.1 \\
\hline
\end{tabular}
}
\end{table}
\subsection{ Comparison with the State-of-the-art Methods}
%-------------------------------
The effectiveness of CGCL is assessed in the context of unsupervised representation learning, following the approach outlined in \cite{sun2019infograph, narayanan2017graph2vec}. We closely adhere to the established approach within Graph CL for graph classification~\cite{sun2019infograph, you2020graph}. We compare our results with the following kernel-based, unsupervised, and self-supervised methods.

\paragraph{Graph Kernels Methods} Graph kernel-based techniques represent traditional approaches to graph classification. They engage directly with graph data by crafting kernel functions that preserve the graph's structural information. For the purpose of this comparative analysis, we have chosen four well-established graph kernel-based methods: Graphlet kernel (GK) \cite{shervashidze2009efficient}, Weisfeiler-Lehman sub-tree kernel (WL) \cite{shervashidze2011weisfeiler}, and deep graph kernels (DGK) \cite{yanardag2015deep}.

\paragraph{Unsupervised Methods} Unsupervised techniques for graph representation learning leverage sub-graph and node similarity scores to guide the learning process, all without relying on label information. In this comparative analysis, we have chosen three prominent unsupervised benchmarks methods: node2vec \cite{grover2016node2vec}, sub2vec \cite{adhikari2018sub2vec}, and graph2vec \cite{narayanan2017graph2vec}.

\paragraph{Self-Supervised Methods} Unlike traditional unsupervised graph representation techniques, the self-supervised approach to graph representation harnesses the capabilities of contrastive learning to unearth profound insights between data samples, leading to significant improvements in graph representation learning. In this comparative study, we've chosen two state-of-the-art contenders, InfoGraph \cite{sun2019infograph} and GraphCL \cite{you2020graph}, which both employ graph neural networks as their foundational architecture.
 
\begin{table*}[t]
\tiny
\caption{Comparing classification accuracy on top of graph representations learned from graph kernels, SOTA representation learning method. The top two results are highlighted by \textcolor{red}{First}, \textcolor{blue}{Second}. The numerical values presented for comparison are obtained from the respective papers, following the identical experimental configurations.}
\label{tab:results}
\resizebox{\textwidth}{!}{%
\begin{tabular}{lccccccc} %% CHANGING THIS TO CENTERALIZE THE VALUES - @ANWAR
\hline
\hline
Methods       & MUTAG             & PTC              & PROTEINS         & DD               & COLLAB           & IMDB-B           & IMDB-M \\
\hline
\hline
\multicolumn{8}{c}{\textbf{Kernel Approaches}}\\
GK            &   81.70 $\pm$ 2.1 & 57.30 $\pm$ 1.4  & -   & -  & 72.80 $\pm$ 0.3  & 65.90 $\pm$ 1.0  & 43.90 $\pm$ 0.4  \\
WL            & 80.63 $\pm$ 3.1  & 56.91 $\pm$ 2.8 & 72.92 $\pm$ 0.6 & -                & \textcolor{red}{78.90 $\pm$ 1.9}  & 72.30 $\pm$ 3.4 & 47.00 $\pm$ 0.5        \\
DGK           & 87.44 $\pm$ 2.7  &   60.10 $\pm$ 2.6 & 73.30 $\pm$ 0.8 & -                & -                & 66.96 $\pm$ 0.6 & 44.60 $\pm$ 0.5 \\ 
\hline
% \hline
% \multicolumn{8}{c}{\textbf{Supervised Approaches}}\\
% GraphSAGE & 85.1 $\pm$ 7.6  & 63.9 $\pm$  7.7 & 73.0 $\pm$ 4.5 & -                &  68.3 $\pm$ 4.2  & 72.3 $\pm$  5.3 &   50.9 $\pm$  2.2     \\
% GCN & 85.6 $\pm$ 5.8  & 64.2 $\pm$  4.3 &  \textcolor{blue}{76.0 $\pm$ 3.2} & - &  79.0 $\pm$ 1.8  & 74.0 $\pm$  3.4 &   51.9 $\pm$  3.8    \\
% GIN-0 & \textcolor{red}{89.4 $\pm$ 5.6}  & 64.6 $\pm$  7.0 & \textcolor{red}{76.2 $\pm$ 2.8} & - &  \textcolor{red}{80.2 $\pm$ 1.9}  & 75.1 $\pm$  5.1 &   52.3 $\pm$ 2.8    \\
% GIN-$\epsilon$ & 89.0 $\pm$ 5.6  & 63.7 $\pm$ 8.2 & 75.9 $\pm$ 3.8 & - &  \textcolor{blue}{80.1 $\pm$ 1.9}  & 74.3 $\pm$  5.1 &  52.1 $\pm$ 3.6 \\
% GAT & \textcolor{red}{89.4 $\pm$ 6.1}  & \textcolor{blue}{66.7 $\pm$ 5.1} & - & - &   67.4 $\pm$ 2.9  & 70.5 $\pm$  2.3 &  47.8 $\pm$ 3.1 \\
% \hline
\hline
\multicolumn{8}{c}{\textbf{Unsupervised Approaches}}\\
node2vec      & 72.63 $\pm$ 10.2 & 58.85 $\pm$ 8.0 & 57.48 $\pm$ 3.6 & -                & 55.70 $\pm$ 0.2
& 50.20 $\pm$ 0.9  & 36.0 $\pm$ 0.7         \\
sub2vec       & 61.05 $\pm$ 15.8 & 59.99 $\pm$ 6.4 & 53.03 $\pm$ 5.6 & -                & 62.10 $\pm$ 1.4  & 55.26 $\pm$ 1.5 &   36.7 $\pm$ 0.8     \\
graph2vec     & 83.15 $\pm$ 9.2  & 60.17 $\pm$ 6.9 & 73.30 $\pm$ 2.1 & -                & 59.90 $\pm$ 0.0 & 71.10 $\pm$ 0.5 &  \textcolor{red}{50.40 $\pm$ 0.9}      \\
\hline
\hline
\multicolumn{8}{c}{\textbf{Self-Supervised Approaches}}\\
InfoGraph & \textcolor{blue}{89.01 $\pm$ 1.1}  &  \textcolor{blue}{61.70 $\pm$ 1.4} & \textcolor{red}{74.44 $\pm$ 0.3} & 72.85 $\pm$ 1.8 & 70.65 $\pm$ 1.1 & \textcolor{red}{73.03 $\pm$ 0.9} & \textcolor{blue}{49.70 $\pm$ 0.5} \\
GraphCL       & 86.80 $\pm$ 1.3  &  61.30 $ \pm$ 2.1 & \textcolor{blue}{74.39 $\pm$ 0.5} & \textcolor{red}{78.62 $\pm$ 0.4} & 71.36 $\pm$ 1.2 & 71.14 $\pm$ 0.4 &  48.58 $\pm$ 0.7      \\
\hline
\textbf{CGCL} & \textcolor{red}{89.91 $\pm$ 6.4}  & \textcolor{red}{66.34  $\pm$ 7.9} &  74.13 $\pm$ 2.8 &  \textcolor{blue}{75.33 $\pm$ 3.3} & \textcolor{blue}{75.10 $\pm$ 1.8} & \textcolor{blue}{72.70 $\pm$ 4.4} &   48.53 $\pm$ 3.1    \\
\hline
% \textbf{Random-CGCL} & 87.81  $\pm$ 7.42  & 65.73 $\pm$ 6.64 & 73.23 $\pm$ 1.76 &  75.15 $\pm$ 2.91 & 75.04 $\pm$ 1.78 & 71.4 $\pm$ 4.03 & 47.4  $\pm$ 2.72 \\
\hline
\end{tabular}
}
\end{table*}

The results of graph classification are presented in Table \ref{tab:results}. When compared to the top-performing unsupervised methods, our proposed approach exhibits significant improvements across several datasets, including MUTAG, PTC, PROTEINS, COLLAB, and IMDB-B, resulting in gains of 6.76\%, 6.17\%, 0.83\%, 13.0\%, and 1.6\%, respectively. Notably, our method consistently outperforms all unsupervised competitors across all datasets except IMDB-M.

In contrast to self-supervised counterparts, our proposed method surpasses the current SOTA on MUTAG, PROTEINS, and COLLAB, and achieves the second-best accuracies on DD and IMDB-B datasets. Specifically, in MUTAG, PROTEINS, and COLLAB, our approach outperforms the existing standards by margins of 0.90\%, 4.64\%, and 3.74\%, respectively. However, on PROTEINS and IMDB-M, our method falls short by 0.31\% and 1.17\%, respectively, compared to the best self-supervised approaches.

In summary, as demonstrated by Table \ref{tab:results}, our proposed method outperforms its competitors on two out of seven graph classification datasets by a considerable margin and achieves top-two accuracy rankings for five out of seven datasets. For the remaining two datasets, our method achieves accuracy levels within 2\% of the SOTA methods.

We also perform an evaluation of our proposed CGCL approach using the edge augmentation method proposed by You~\cite{you2020graph}. We follow an i.i.d. uniform distribution to add/drop edges instead of the systematic edge perturbation methods mentioned in Section \ref{sec:augmentation}. We call this approach Random-CGCL. Table \ref{tab:aug - results} presents the results for graph classification accuracies for all seven datasets under consideration. It can be seen that the accuracy of Random-CGCL is comparable to both GraphCL and CGCL. However, CGCL outperforms Random-CGCL on all the datasets. These results suggest that a sophisticated augmentation technique, as employed in CGCL, is essential for effectively leveraging control-based embeddings in graph contrastive learning.

\begin{table*}[t]\tiny
% \small
\caption{Graph Classification accuracies using different Augmentation methods. The top accuracies are \textbf{highlighted}.}
\label{tab:aug - results}
\resizebox{\textwidth}{!}{%
\begin{tabular}{lccccccc} % Changed to c to centralized the values @anwar Also adding zeros to one suffixes with one digit
\hline
\hline
Methods       & MUTAG             & PTC              & PROTEINS         & DD               & COLLAB           & IMDB-B           & IMDB-M \\
\hline
\hline
GraphCL       & 86.80 $\pm$ 1.3  &  61.30 $ \pm$ 2.1 & \textbf{74.39 $\pm$ 0.5} & \textbf{78.62 $\pm$ 0.4} & 71.36 $\pm$ 1.2 & 71.14 $\pm$ 0.4 &  \textbf{48.58 $\pm$ 0.7} \\
\hline
Random-CGCL & 87.81  $\pm$ 7.4  & 65.73 $\pm$ 6.6 & 73.23 $\pm$ 1.8 &  75.15 $\pm$ 2.9 & 75.04 $\pm$ 1.8 & 71.40 $\pm$ 4.0 & 47.40  $\pm$ 2.7 \\
\hline
\textbf{CGCL} & \textbf{89.91 $\pm$ 6.4}  & \textbf{66.34  $\pm$ 7.9} &  74.13 $\pm$ 2.8 &  75.33 $\pm$ 3.2 & \textbf{75.10 $\pm$ 1.8} & \textbf{72.70 $\pm$ 4.4} &   48.53 $\pm$ 3.1    \\
\hline
\end{tabular}
}
\end{table*}

%========= Conclusion and Discussion ================
% \vspace{-0.05in}
\section{Conclusion}
\label{sec:conclusion}
In this work, we introduced Control-based Graph Contrastive Learning (CGCL), a novel framework for unsupervised graph representation learning that leverages graph controllability properties. We employed advanced edge augmentation methods to create augmented data for contrastive learning while preserving the controllability rank of graphs. Our extensive experiments on standard graph classification benchmarks showcased CGCL's effectiveness, outperforming SOTA unsupervised and self-supervised methods on multiple datasets. We also compared CGCL with a random edge augmentation approach, underscoring the significance of our controllability-driven augmentation strategy. The success of CGCL suggests that incorporating domain-specific structural knowledge, like controllability, can significantly enhance graph representation learning, opening avenues for further research. Overall, CGCL presents a promising approach for applications requiring informative graph representations. In the future, we aim to refine our graph augmentation techniques to preserve all relevant control features.

%============== Bibliography ========================
\bibliographystyle{IEEEtran}
\vspace{-0.08in}
\bibliography{References}
\end{document}